\documentclass{article}
\usepackage{microtype}
\usepackage{graphicx}
\usepackage{subcaption}
\usepackage{booktabs}
\usepackage{hyperref}

\usepackage[preprint]{icml2026}
\usepackage{amsmath,amssymb,mathtools,amsthm}
\newtheorem{proposition}{Proposition}
\usepackage[capitalize,noabbrev]{cleveref}
\usepackage{tikz}
\usetikzlibrary{shapes.geometric, arrows.meta, positioning, calc}
\usepackage{xcolor}
\usepackage{bm}
\newcommand{\vect}[1]{\bm{#1}}
\usepackage{algorithm}
\usepackage{algorithmic}
\definecolor{inputbg}{HTML}{E8E0F0}
\definecolor{extractbg}{HTML}{D4E8F7}
\definecolor{dagbg}{HTML}{D5EDDA}
\definecolor{z3bg}{HTML}{FFF3CD}
\definecolor{resultbg}{HTML}{F8D7DA}
\definecolor{exactc}{HTML}{009E73}
\definecolor{approxc}{HTML}{E69F00}
\definecolor{arrowc}{HTML}{555555}

\icmltitlerunning{Compiling Fourier Neural Operators into SMT Solvers}

\begin{document}
\twocolumn[
  \icmltitle{Can We Formally Verify Neural PDE Surrogates?\\SMT Compilation of Small Fourier Neural Operators}
  \icmlsetsymbol{equal}{*}
  \begin{icmlauthorlist}
    \icmlauthor{Ali Baheri}{rit}
    \icmlauthor{Ignacio Laguna Peralta}{llnl}
  \end{icmlauthorlist}
  \icmlaffiliation{rit}{Rochester Institute of Technology, Rochester, NY, USA}
  \icmlaffiliation{llnl}{Lawrence Livermore National Laboratory, Livermore, CA, USA}
  \icmlcorrespondingauthor{Ali Baheri}{akbeme@rit.edu}
  \icmlkeywords{Neural Operators, Formal Verification, PDE Surrogates, SMT Solving}
  \vskip 0.3in
]
\makeatletter
\renewcommand{\Notice@String}{}
\makeatother
\printAffiliationsAndNotice{}

\begin{abstract}
Fourier Neural Operators (FNOs) can greatly accelerate PDE simulation, but they are often used without formal guarantees that they preserve basic physical structure. We show that, once the trained weights and grid are fixed, the spectral convolution in an FNO is a linear map. As a result, the full forward pass is piecewise-linear and can be represented exactly in Z3's linear real arithmetic. We study two encodings. The \emph{exact} encoding compiles the spectral convolution into a dense matrix multiplication, which is sound for both proofs and counterexamples. The lighter \emph{frozen} encoding replaces the spectral path with a constant, making it faster but approximate. On 10 small FNO surrogates for 1D advection-diffusion-reaction (85 to 117 parameters, grids 8 to 32), the exact encoding gives 2 sound positivity proofs on linear (ReLU-free) models, 5 sound positivity counterexamples, and 10 sound mass-violation counterexamples; the remaining 3 positivity queries on ReLU models time out. For mass non-increase, Z3 finds worse counterexamples than both gradient-based falsification and Monte Carlo on 7 of 10 models. The frozen encoding scales to grid size 64 with sub-second positivity checks, but it no longer provides certificates for the original FNO. Overall, the results make the soundness--scalability tradeoff explicit and point to what is needed for formal verification of production-scale neural operators.
\end{abstract}

\section{Introduction}
\label{sec:intro}

Learned PDE surrogates, including Fourier Neural Operators~\citep{li2021fno} and DeepONets~\citep{lu2021deeponet}, can reduce the cost of scientific simulation by orders of magnitude. The remaining obstacle is trust. A surrogate may silently break conservation, produce negative concentrations, or introduce other nonphysical behavior, and these errors can then propagate through downstream analyses. Most validation pipelines rely on Monte Carlo (MC) testing, which only checks the inputs that were sampled. Formal verification offers a stronger alternative: an SMT solver can either certify that a property holds for every input in a prescribed region or return a concrete counterexample.

Neural network verification is a mature and active area~\citep{katz2017reluplex, wang2021betacrown}, but most existing tools are designed for classifiers or control policies. They do not directly address spectral convolutions, function-space inputs, or properties such as conservation and positivity. This paper begins from a simple observation with useful consequences: on a fixed grid, and with trained weights held fixed, the spectral convolution in an FNO is a composition of linear maps--the DFT, a learned mode-wise scaling, and the inverse DFT. Therefore, an FNO with ReLU nonlinearities is piecewise-linear and can be compiled into Z3~\citep{demoura2008z3} without approximating the forward pass.

Our contributions are as follows. First, we give an exact compilation of fixed-grid FNOs into Z3's linear real arithmetic by constructing explicit spectral matrices. Second, to our knowledge, we give the first sound formal proofs for a neural PDE operator: positivity certificates for linear FNOs at grid sizes up to 16. Third, we compare Z3 with gradient-based falsification and MC across 10 small models. Finally, we characterize the soundness--scalability boundary: exact encodings support proofs for small linear models and counterexamples for somewhat larger ReLU models, while approximate encodings scale further but lose formal soundness. Although we use Z3 throughout, the compiled piecewise-linear representation is solver-agnostic and could be paired with specialized SMT, nonlinear, or neural-network verification tools in future work.

\section{Related Work}
\label{sec:related}

\textbf{Neural PDE surrogates.}
FNOs~\citep{li2021fno} learn integral kernels in Fourier space, while DeepONets~\citep{lu2021deeponet} represent operators through branch-trunk decompositions. Physics-informed networks~\citep{raissi2019pinns} add PDE residuals to the training objective, and neural operator theory~\citep{kovachki2023neural} gives approximation results for broad classes of operators. Together, these methods have made learned PDE surrogates increasingly practical. However, good predictive accuracy alone does not ensure that a trained surrogate respects physical constraints such as positivity or conservation.

\textbf{Neural network verification.}
Formal verification of neural networks has largely focused on classifiers and control policies. Reluplex~\citep{katz2017reluplex} introduced SMT-style reasoning for ReLU networks, and $\alpha$-$\beta$-CROWN~\citep{wang2021betacrown} scales bound propagation to large vision models. Related work in control and autonomy has studied safety validation and falsification for learning-enabled systems~\citep{shahrooei2023falsification,yancosek2024beacon,baheri2022verification}. These methods share our goal of finding or ruling out unsafe behavior, but they do not directly target spectral convolutions, function-space inputs, or PDE-specific properties.

\textbf{Trustworthy surrogates.}
Physics-constrained architectures~\citep{trask2022physics} reduce violations by building constraints into the model or the loss. Interval bound propagation~\citep{gowal2018effectiveness} can certify output ranges for standard feedforward networks, while LyZNet~\citep{liu2024lyznet} verifies Lyapunov stability and BEACONS~\citep{gorard2026beacons} provides algebraic error bounds. To the best of our knowledge, prior work has not compiled a trained Fourier neural operator into an exact SMT representation for verifying physics-motivated properties over a continuous input set.

\section{Method}
\label{sec:method}

\begin{figure*}[t]
\centering
\begin{tikzpicture}[>=Stealth, font=\sffamily\footnotesize,
    mainbox/.style={rectangle, rounded corners=3pt, draw=#1!50!black, line width=0.6pt, fill=#1, minimum width=2.3cm, minimum height=1.5cm, text width=2.1cm, align=center, inner sep=4pt},
    detailbox/.style={rectangle, rounded corners=2pt, draw=#1!60!black, line width=0.4pt, fill=#1!25, text width=2.7cm, align=center, inner sep=3pt, font=\sffamily\tiny},
    arrmark/.style={-{Stealth[length=4pt,width=3pt]}, line width=0.8pt, color=arrowc}]
\node[mainbox=inputbg] (B1) {\textbf{\small Trained FNO}\\[2pt]{\tiny Grid $N\!\in\!\{8,16,32\}$}\\{\tiny Hidden $H\!=\!2$}\\{\tiny 85 to 117 params}};
\node[mainbox=extractbg, right=2.0cm of B1] (B2) {\textbf{\small Spectral}\\{\textbf{\small Matrix}}\\[2pt]{\tiny $W_{\mathrm{spec}} \in \mathbb{R}^{NH \times NH}$}\\{\tiny precomputed per layer}};
\node[mainbox=z3bg, right=2.0cm of B2] (B3) {\textbf{\small Z3 SMT}\\{\textbf{\small Encoding}}\\[2pt]{\tiny Exact: $O(N^2 H^2 L)$}\\{\tiny Frozen: $O(N H^2 L)$}};
\node[mainbox=resultbg, right=2.0cm of B3] (B4) {\textbf{\small Result}\\[2pt]{\tiny UNSAT $\Rightarrow$ Proof}\\{\tiny SAT $\Rightarrow$ Sound CE}};
\draw[arrmark] (B1.east) -- (B2.west);
\draw[arrmark] (B2.east) -- (B3.west);
\draw[arrmark] (B3.east) -- (B4.west);
\node[font=\sffamily\tiny\itshape, color=arrowc, above] at ($(B1.east)!0.5!(B2.west)+(0,0.35)$) {build matrix};
\node[font=\sffamily\tiny\itshape, color=arrowc, above] at ($(B2.east)!0.5!(B3.west)+(0,0.35)$) {translate to Z3};
\node[font=\sffamily\tiny\itshape, color=arrowc, above] at ($(B3.east)!0.5!(B4.west)+(0,0.35)$) {query property};
\node[font=\sffamily\scriptsize\bfseries, color=inputbg!40!black] at ($(B1.north)+(0,0.45)$) {STAGE 1};
\node[font=\sffamily\scriptsize\bfseries, color=extractbg!40!black] at ($(B2.north)+(0,0.45)$) {STAGE 2};
\node[font=\sffamily\scriptsize\bfseries, color=z3bg!50!black] at ($(B3.north)+(0,0.45)$) {STAGE 3};
\node[font=\sffamily\scriptsize\bfseries, color=resultbg!40!black] at ($(B4.north)+(0,0.45)$) {STAGE 4};
\node[detailbox=inputbg, below=0.5cm of B1] (D1) {\textbf{1D ADR, periodic BCs}\\Averaged operator:\\each $u_0$ paired with\\one $(D,v,\lambda)$ draw};
\draw[dashed, thin, color=inputbg!50!black] (B1.south) -- (D1.north);
\node[detailbox=extractbg, below=0.5cm of B2] (D2) {\textbf{Key insight:} spectral\\conv with fixed weights\\$=$\,linear map on $h$\\Verified: error $< 10^{-15}$};
\draw[dashed, thin, color=extractbg!50!black] (B2.south) -- (D2.north);
\node[detailbox=dagbg, below=0.5cm of B3] (D3) {\textcolor{exactc}{\textbf{Exact:}} full $W_{\mathrm{spec}}$\\(sound, dense, $N\!\leq\!32$)\\[1pt]\textcolor{approxc}{\textbf{Frozen:}} spectral const.\\(fast, approximate, $N\!\leq\!64$)};
\draw[dashed, thin, color=dagbg!50!black] (B3.south) -- (D3.north);
\node[detailbox=resultbg, below=0.5cm of B4] (D4) {Exact UNSAT $=$ sound proof\\Exact SAT $=$ sound CE\\Frozen: approximate only\\(validate CE on original)};
\draw[dashed, thin, color=resultbg!50!black] (B4.south) -- (D4.north);
\end{tikzpicture}
\vspace{-0.5em}
\caption{The FNO-to-SMT bridge. We compile the spectral convolution into an explicit dense matrix and then query physics-motivated properties with an SMT solver. The exact encoding is sound but dense; the frozen encoding is faster but approximate.}
\label{fig:pipeline}
\end{figure*}

Consider a single FNO hidden layer on a grid of size $N$ with hidden width $H$. Let $h\in\mathbb{R}^{N\times H}$ denote the hidden state. A spectral convolution applies a discrete Fourier transform, multiplies selected Fourier modes by learned complex weights, zeros or truncates the remaining modes, and then applies the inverse transform. Once the weights are fixed, this operation can be written as
\begin{equation}
  \mathcal{K}_{\ell}(h)=\operatorname{IFFT}\!\left(R_{\ell}\odot \operatorname{FFT}(h)\right),
\end{equation}
where $R_{\ell}$ contains the learned mode weights. Every operation in this expression is linear in $h$. Therefore, there is a real matrix $W^{\mathrm{spec}}_{\ell}\in\mathbb{R}^{NH\times NH}$ such that
\begin{equation}
  \vect(\mathcal{K}_{\ell}(h)) = W^{\mathrm{spec}}_{\ell}\vect(h).
\end{equation}
We build $W^{\mathrm{spec}}_{\ell}$ by applying the spectral convolution to each standard basis vector and stacking the resulting outputs as columns. In implementation, we validate this construction by comparing the dense matrix product against the FFT-based code path on random inputs; the largest observed discrepancy is below $10^{-15}$.

\begin{proposition}
A fixed-grid FNO with fixed learned parameters and ReLU nonlinearities is a piecewise-linear map from the discretized input function to the discretized output function.
\end{proposition}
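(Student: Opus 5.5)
We argue by writing the forward pass as a composition of maps, each affine or coordinatewise ReLU, and then use closure of continuous piecewise-linear maps under composition.

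Fix a grid of size $N$, hidden width $H$, depth $L$, and all trained parameters. A standard FNO then has three kinds of stages.
\begin{enumerate}
\item A \emph{lifting} map $P$. It is applied pointwise on the grid, so it is an affine map $\mathbb{R}^{N\cdot d_{\mathrm{in}}}\to\mathbb{R}^{NH}$. Any fixed grid-coordinate channel only adds a constant term.
\item $L$ \emph{Fourier layers} of the form $h\mapsto \sigma\bigl(W^{\mathrm{spec}}_{\ell}\vect(h) + (I_N\otimes W_\ell)\vect(h) + \mathbf{1}\otimes b_\ell\bigr)$. Here $W_\ell$ is the pointwise linear (1$\times$1 convolution) path, $b_\ell$ is its bias, and $\sigma$ is ReLU (possibly omitted in the last layer).
\item A \emph{projection} $Q$. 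It is pointwise and is either affine or an affine--ReLU--affine MLP.
\end{enumerate}

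The only nonobvious piece is the spectral path. By the construction preceding the proposition, it equals multiplication by a fixed real matrix $W^{\mathrm{spec}}_{\ell}$. We justify this in three steps.
\begin{enumerate}
\item The DFT and inverse DFT are $\mathbb{C}$-linear, hence $\mathbb{R}$-linear.
\item Multiplication by fixed complex weights $R_\ell$, including mixing across channels within each retained mode, is $\mathbb{R}$-linear once complex numbers are identified with $\mathbb{R}^2$. Zeroing the truncated modes is a coordinate projection.
\item The inverse real transform returns a real vector. When the weights are not Hermitian-symmetric, the real-FFT convention (irfft, or taking the real part) is again an $\mathbb{R}$-linear map.
\end{enumerate}
A composition of $\mathbb{R}$-linear maps is $\mathbb{R}$-linear, so $W^{\mathrm{spec}}_{\ell}$ is a real $NH\times NH$ matrix. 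The pre-activation of each layer is therefore the affine map $\vect(h)\mapsto A_\ell\vect(h)+c_\ell$, where $A_\ell = W^{\mathrm{spec}}_{\ell}+I_N\otimes W_\ell$.

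For piecewise-linearity we take the standard definition: $f$ is continuous, and $\mathbb{R}^n$ is covered by finitely many closed polyhedra on each of which $f$ is affine.
\begin{itemize}
\item Affine maps are trivially piecewise-linear.
\item Coordinatewise ReLU on $\mathbb{R}^m$ is piecewise-linear. The $2^m$ orthant-type polyhedra $\{x: s_i x_i\ge 0\}$, $s\in\{\pm1\}^m$, cover $\mathbb{R}^m$, and on each of them ReLU is a diagonal 0/1 linear map.
\end{itemize}

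The key lemma is closure under composition. Let $f$ be piecewise linear with pieces $\{P_i\}$ and $g$ be piecewise linear with pieces $\{Q_j\}$. On $P_i$ write $f(x)=M_ix+m_i$. Then $P_i\cap f^{-1}(Q_j)=P_i\cap\{x: M_ix+m_i\in Q_j\}$ is a polyhedron, since it is the preimage of a polyhedron under an affine map. On it, $g\circ f$ is a composition of two affine maps, hence affine. These finitely many polyhedra cover the domain, and $g\circ f$ is continuous. Applying the lemma inductively through lifting, the $L$ Fourier layers, and the projection gives the proposition. It also gives an explicit bound: the activation pattern $s$ of all ReLUs indexes the linear regions, so there are at most $2^{\#\mathrm{ReLUs}}$ regions. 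This is exactly the case split that Z3's linear real arithmetic explores.

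The main obstacle is not the composition argument, which is routine, but pinning down that the spectral block is genuinely \emph{real-linear} under the paper's FFT conventions. This covers truncation to the first $k$ modes, the rfft/irfft pair, and the fact that weights are not Hermitian-symmetric. I would handle this by writing the real-FFT pipeline explicitly as $\mathcal{K}_\ell=\mathcal{F}^{-1}_{\mathbb{R}}\circ\Pi_k\circ\mathcal{R}_\ell\circ\mathcal{F}_{\mathbb{R}}$, checking real-linearity of each factor. Since the columns are obtained by evaluating $\mathcal{K}_\ell$ on basis vectors, linearity then implies $\mathcal{K}_\ell(h)=W^{\mathrm{spec}}_{\ell}\vect(h)$ for all $h$, not just on the tested random inputs. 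If the architecture also uses normalization layers, those with fixed statistics (eval-mode batch norm) are affine and fit the argument. Input-dependent normalization such as layer norm would break it, so I would state the proposition for the ReLU-plus-affine architecture only.
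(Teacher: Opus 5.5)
Your proof is correct and follows the same route as the paper, whose proof just notes that the lifting, bypass, spectral, and projection maps are affine, that ReLU is coordinatewise piecewise-linear, and that a finite composition of such maps is piecewise-linear. You additionally fill in two details the paper leaves implicit: the explicit real-linearity of the truncated FFT, weight multiplication, and inverse-FFT pipeline, and the polyhedral-refinement argument that piecewise-linear maps are closed under composition.
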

\begin{proof}
The lifting layer, pointwise bypass maps, spectral matrices, and projection layer are affine maps over the real-valued grid representation. ReLU is piecewise-linear and is applied coordinate-wise. A finite composition of affine maps and coordinate-wise piecewise-linear maps is piecewise-linear.
\end{proof}

\subsection{Exact Encoding}

The exact encoding merges the spectral matrix and pointwise bypass map into one affine expression per layer. After vectorization, a hidden layer can be written as
\begin{equation}
  z^{(\ell)} = \left(W^{\mathrm{spec}}_{\ell}+W^{\mathrm{byp}}_{\ell}\right)h^{(\ell-1)} + b_{\ell},
  \qquad
  h^{(\ell)} = \sigma(z^{(\ell)}),
\end{equation}
where $\sigma(t)=\max(t,0)$ for ReLU layers and $\sigma(t)=t$ for linear layers. In Z3, each ReLU coordinate is encoded as an if-then-else term, or equivalently as the graph of $\max(t,0)$ over the reals. For the compiled real-valued model, this encoding is sound: UNSAT proves that no admissible real input violates the property, while SAT returns a valid real counterexample.

The main cost is the dense spectral matrix. Each layer introduces $O(N^2H^2)$ linear terms, so a network with $L$ spectral layers contributes $O(N^2H^2L)$ terms before adding the property constraints. This is manageable for the small models studied here, but it is not a scalable representation for high-resolution FNOs.

\subsection{Frozen Encoding}

To explore larger grids, we also evaluate a cheaper approximation. Given a reference input $u^{\mathrm{ref}}$, we evaluate the spectral path at that reference and substitute the result as a constant:
\begin{equation}
  z^{(\ell)}_{ij} = \sum_{m=1}^{H} W^{\mathrm{byp}}_{\ell,jm}h^{(\ell-1)}_{im} + b_{\ell,j} + c^{(\ell)}_{ij}.
\end{equation}
This is not a first-order linearization, since the spectral path is already linear. Instead, it is a zeroth-order substitution that removes the input dependence of the spectral component. The term count drops to $O(NH^2L)$, but the SMT problem now describes the frozen surrogate rather than the original FNO. Consequently, frozen UNSAT results are not formal certificates for the original model. Frozen SAT results must be checked on the original FNO before they can be treated as genuine counterexamples.

\subsection{Properties}

Our experiments use the one-dimensional advection-diffusion-reaction equation with periodic boundary conditions,
\begin{equation}
  \partial_t u + v\partial_x u = D\partial_{xx}u - \lambda u .
\end{equation}
Each training example pairs a random initial condition with one draw of $(D,v,\lambda)$. Thus, the surrogate is an averaged, parameter-free operator, rather than a model conditioned on the PDE coefficients.

Let $f_\theta$ be the trained FNO, and define the discrete mass as
\begin{equation}
  M(u)=\frac{1}{N}\sum_{i=1}^{N}u_i .
\end{equation}
We study two properties.

\paragraph{Mass non-increase.}
Under periodic boundary conditions and a nonnegative reaction rate, the physical mass satisfies $dM/dt=-\lambda M(t)\leq 0$ for nonnegative states. We therefore search for violations of
\begin{equation}
  M(f_\theta(u)) \leq M(u)+\varepsilon,
\end{equation}
with $\varepsilon=0.05$. A counterexample satisfies $M(f_\theta(u))-M(u)>\varepsilon$.

\paragraph{Positivity.}
For nonnegative initial conditions, the PDE should not create negative concentrations. We verify whether
\begin{equation}
  \min_i f_\theta(u)_i \geq 0
\end{equation}
for all smooth inputs with $u_i\geq 0.1$. A counterexample satisfies $f_\theta(u)_i<0$ for at least one grid point.

For both properties, admissible inputs are constrained by
\begin{equation}
  0\leq u_i\leq 5,\qquad |u_{i+1}-u_i|\leq \frac{15}{N},\qquad |u_1-u_N|\leq \frac{15}{N},
\end{equation}
with the stronger lower bound $u_i\geq 0.1$ for positivity queries. These constraints define a continuous family of smooth periodic inputs, not a finite test set.

\section{Experiments}
\label{sec:experiments}

\subsection{Setup}

We train 10 FNO surrogates on 1D ADR with periodic boundary conditions at $N \in \{8, 16, 32\}$, hidden width $H{=}2$, and two depths: $L{=}1$ (linear, 6 models) and $L{=}2$ (one ReLU layer, 4 models). The models have 85 to 117 parameters. Training uses random-feature regression: hidden layers are frozen, and the projection layer is fit by least squares on 500 samples. This is \textit{not} standard end-to-end training. The compilation method itself is architecture-agnostic, but our scalability demonstration is limited to these toy-sized models.

\textbf{Baselines.} We compare against two baselines. The first is MC testing with 5,000 random smooth inputs. The second is gradient-based falsification: projected finite-difference ascent on the original model, using 10 random restarts and 100 steps per restart. We use finite differences rather than autograd because the implementation is in numpy. A PyTorch implementation with Adam and multiple restarts would be a stronger baseline, and we leave that comparison to future work. Both baselines operate on the original model, not on an encoding.

\begin{figure}[t]
\centering
\includegraphics[width=\linewidth]{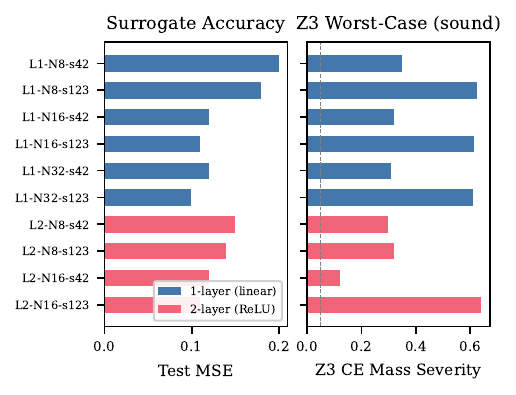}
\vspace{-1.5em}
\caption{Model quality. \textbf{Left:} Test MSE. \textbf{Right:} Severity of Z3's sound counterexample for mass non-increase, with every counterexample re-evaluated on the original model.}
\label{fig:quality}
\end{figure}

\subsection{Mass Non-Increase}

With the exact encoding, Z3 finds sound mass-violation counterexamples on \textbf{all 10 models} at $\varepsilon{=}0.05$ (\Cref{fig:mass}). Each counterexample is verified on the original model with zero approximation gap, and the violation severity ranges from 0.12 to 0.64.

Z3 finds the most severe counterexample on 7 of 10 models (\Cref{fig:mass}). Its violations are 1.1 to 1.6$\times$ more severe than those found by gradient falsification and 1.2 to 2.5$\times$ more severe than those found by MC. Gradient search is competitive on the 2-layer models (severity 0.58 to 0.66 versus Z3's 0.12 to 0.64), suggesting that the ReLU nonlinearity creates a landscape where local search can find deep violations. On the 1-layer models, Z3 has a clearer advantage because the linear structure lets the solver optimize globally. Z3 solve times for mass range from $<$0.1\,s at $N{=}8$ to 1.8\,s for the $N{=}16$, $L{=}2$ case. Gradient falsification takes 1 to 4\,s.

\begin{figure}[t]
\centering
\includegraphics[width=\linewidth]{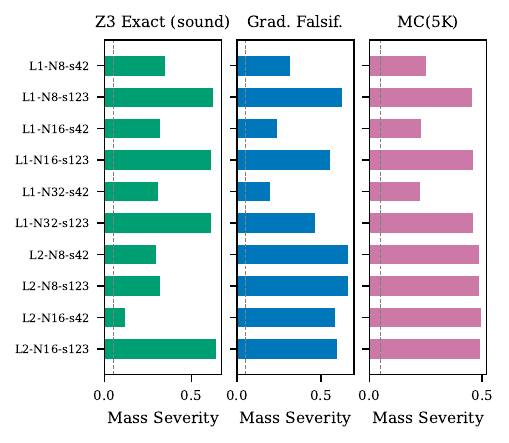}
\vspace{-1.5em}
\caption{Mass non-increase ($\varepsilon{=}0.05$): severity of the worst counterexample found by Z3 exact (sound), gradient falsification, and MC(5K). Z3 finds the most severe CE on 7 of 10 models.}
\label{fig:mass}
\end{figure}

\subsection{Positivity: Sound Proofs and Counterexamples}

The positivity results are more nuanced (\Cref{fig:positivity}, \Cref{tab:summary}). On 1-layer linear models, Z3 produces \textbf{2 sound positivity proofs}: L1-N8-s123 (0.1\,s) and L1-N16-s123 (2.3\,s). These results certify that, for \textit{all} smooth positive inputs with $u_i \geq 0.1$, the output is nonnegative at every grid point. To our knowledge, these are the first sound formal proofs for a neural PDE operator. At the same time, the scope is deliberately narrow: the certificates apply to the compiled real-valued versions of very small, linear (ReLU-free) FNOs. Z3 also finds \textbf{4 sound positivity counterexamples} on the remaining linear models.

On 2-layer ReLU models, Z3 finds \textbf{1 sound counterexample} (L2-N8-s42, minimum output $-0.063$) and \textbf{times out} on the other 3 positivity queries. The $NH{=}16$ ReLU neurons allow up to $2^{16}$ activation patterns, which creates a difficult case-splitting problem. Gradient falsification finds violations on all 4 ReLU models, with minimum outputs from $-0.088$ to $-0.577$. In these cases, gradient search is often more severe than Z3, showing that it remains a strong adversary for differentiable models.

\begin{figure}[t]
\centering
\includegraphics[width=\linewidth]{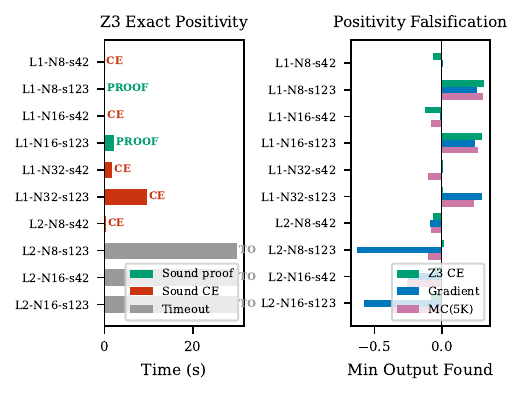}
\vspace{-1.5em}
\caption{Positivity. \textbf{Left:} Z3 exact results: green = sound proof, red = sound CE, gray = timeout. \textbf{Right:} Minimum output found by Z3, gradient falsification, and MC. Z3 proofs certify positivity for all constrained inputs.}
\label{fig:positivity}
\end{figure}

\subsection{Frozen Encoding: Trading Soundness for Scale}

To make the tradeoff concrete, we apply the frozen encoding to 8 models at $N{=}32\text{--}64$ with $H \in \{2,4,8\}$ (117 to 3,249 parameters). The frozen encoding certifies positivity on all 8 frozen surrogates in 0.1 to 2.2\,s and finds mass counterexamples on 4 of 8 models at $\varepsilon{=}0.05$ in 0.03 to 7.5\,s. Of those mass counterexamples, 3 are confirmed on the original model (severity 0.25 to 0.63) and 1 is not confirmed (severity 0.02 $<$ $\varepsilon$). The positivity proofs are approximate: they certify only the frozen surrogate, not the original FNO. None of the $N{=}32\text{--}64$ original models violate positivity on the test set, so the frozen proofs agree with empirical evidence here, but they should not be read as formal guarantees for the original models.

\subsection{The Soundness--Scalability Boundary}

\Cref{tab:summary} and \Cref{fig:solvetime} summarize the boundary. The exact encoding provides: (1) sound mass counterexamples on all models up to $N{=}32$ in less than 10\,s; (2) sound positivity counterexamples on linear and small ReLU models; and (3) sound positivity proofs on linear models up to $N{=}16$ in less than 3\,s. It does \textit{not} provide sound proofs for the ReLU models within the timeout. The frozen encoding reaches $N{=}64$ and larger hidden widths, but it does so by giving up soundness for the original model.

\begin{table}[t]
\centering
\small
\caption{Summary. Exact encoding at $\varepsilon{=}0.05$; all CEs are sound. Frozen encoding on separate models at $N{=}32\text{--}64$.}
\label{tab:summary}
\vspace{0.3em}
\begin{tabular}{@{}llcccc@{}}
\toprule
Encoding & Property & Proof & CE & UC & TO \\
\midrule
Exact, $L{=}1$ & Mass & 0 & 6 & -- & 0 \\
Exact, $L{=}1$ & Positivity & 2 & 4 & -- & 0 \\
Exact, $L{=}2$ & Mass & 0 & 4 & -- & 0 \\
Exact, $L{=}2$ & Positivity & 0 & 1 & -- & 3 \\
\midrule
Frozen, $N{\leq}64$ & Mass & 0 & 3 & 1 & 4 \\
Frozen, $N{\leq}64$ & Positivity & 8* & 0 & -- & 0 \\
\bottomrule
\multicolumn{6}{@{}l}{\footnotesize *Certifies frozen surrogate only, not original model.}
\end{tabular}
\end{table}

\begin{figure}[t]
\centering
\includegraphics[width=\linewidth]{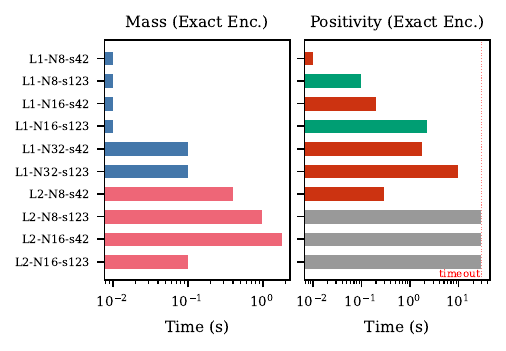}
\vspace{-1.5em}
\caption{Z3 solve time with the exact encoding. \textbf{Left:} Mass. \textbf{Right:} Positivity. Linear models solve faster, while ReLU models time out for positivity proofs.}
\label{fig:solvetime}
\end{figure}

\section{Discussion}
\label{sec:discussion}

\textbf{What this demonstrates.} The main technical point is that FNOs can be compiled exactly into piecewise-linear form through spectral matrix construction. On small models, this lets Z3 produce both sound proofs and sound counterexamples. In particular, we obtain 2 positivity proofs for linear models and mass-violation counterexamples for every model tested. For mass non-increase, Z3 also finds more severe counterexamples than gradient falsification on 7 of the 10 models.

\textbf{What this does not demonstrate.} These experiments do not solve FNO verification at scale. The models are small: 85 to 117 parameters, 1D ADR data, 500 training samples, and frozen hidden layers. Sound proofs are only obtained for linear models ($L{=}1$); ReLU positivity proofs time out. A production FNO with $N \geq 128$ and $H \geq 32$ would produce spectral matrices of dimension $4096 \times 4096$ or larger, which is far beyond the regime where the current Z3 encoding is practical.

\textbf{Gradient falsification is a strong adversary.} On 2-layer models, gradient search finds positivity violations that are comparable to, and sometimes more severe than, those found by Z3 at lower cost. Our finite-difference implementation is also not the strongest possible version of this baseline. Autograd-based projected gradient ascent with Adam and many random restarts would likely be stronger. Z3's distinct value is not that it is always the cheapest falsifier, but that it can prove the absence of violations and can optimize globally on linear models.

\textbf{Path to scale.} The spectral matrix construction is the enabling step; the downstream verifier can change. Once an FNO has been compiled into a piecewise-linear network, specialized neural-network verification tools such as $\alpha$-$\beta$-CROWN or bound-propagation methods may scale better than a general-purpose SMT solver. SMT over quantifier-free linear real arithmetic (QF\_LRA) is decidable, unlike general integer arithmetic, but the ReLU case splits make the piecewise-linear fragment NP-complete, equivalent in spirit to mixed-integer programming. This explains why Z3 times out on ReLU positivity proofs. Solvers specialized for this fragment, such as Marabou or $\alpha$-$\beta$-CROWN, exploit network structure that Z3 does not. Exact rational arithmetic would also help close the gap between real-valued reasoning and floating-point implementation. Extending the bridge from random-feature regression to gradient-trained FNOs requires no conceptual change, since only the learned weight values enter the encoding, but larger $H$ and $N$ will require more scalable solvers.

\section{Conclusion}
\label{sec:conclusion}

We showed that small FNOs can be compiled exactly into piecewise-linear SMT form by constructing explicit spectral matrices. This yields, to our knowledge, the first sound formal proofs for a neural PDE operator: positivity certificates for very small linear FNOs at grids up to 16. The exact encoding also finds sound counterexamples that are more severe than gradient-based falsification on most models. The frozen encoding reaches larger grids, but only by giving up soundness for the original FNO. The boundary is therefore clear: exact proofs are currently practical for small linear models, exact counterexamples extend to moderate ReLU models, and larger neural operators will require either approximate encodings or integration with scalable neural-network verifiers. We hope this proof of concept encourages the formal verification community to engage with neural PDE surrogates, and encourages the operator-learning community to treat verifiability as a design criterion. Code and data will be released upon publication.

\bibliography{references}
\bibliographystyle{icml2026}

\end{document}